\PassOptionsToPackage{table}{xcolor}
\documentclass[]{alaya}
\usepackage{makecell}
\usepackage{wrapfig}
\usepackage{tabularx}
\usepackage{textcomp}
\usepackage{stfloats}
\usepackage{url}
\usepackage{verbatim}
\usepackage{titlesec}
\usepackage{tocloft}
\usepackage{adjustbox}
\usepackage{multirow}
\usepackage{pifont}
\usepackage[sc]{mathpazo}
\usepackage{tikz}
\usepackage{comment}
\usepackage{amsmath,amssymb}
\usepackage{colortbl}
\usepackage[numbers]{natbib}
\usepackage{color}
\usepackage{booktabs}
\usepackage{hyperref}
\usepackage{graphicx}
\usepackage{subcaption}
\RequirePackage{xspace}
\makeatletter
\DeclareRobustCommand\onedot{\futurelet\@let@token\@onedot}
\def\@onedot{\ifx\@let@token.\else.\null\fi\xspace}
\usepackage[most]{tcolorbox}
\usepackage{array}
\usepackage{siunitx}
\usepackage[table]{xcolor}
\usepackage{caption}
\definecolor{headerpurple}{HTML}{d8d2fc}
\definecolor{rowgray}{gray}{0.95}
\usepackage{CJKutf8}
\def\eg{\emph{e.g}\onedot}

\def\ie{\emph{i.e}\onedot}

\makeatother

\definecolor{adptorange}{RGB}{248, 205, 172}
\definecolor{cmpblue}{RGB}{189, 215, 238}

\definecolor{our_red}{RGB}{232,157,160}
\definecolor{our_blue}{RGB}{136,206,230}
\definecolor{our_orange}{RGB}{246,200,168}
\definecolor{our_green}{RGB}{178,211,164}

\definecolor{attn_code0}{RGB}{247,215,200}
\definecolor{attn_code1}{RGB}{238,169,139}
\definecolor{mlp_code0}{RGB}{204,201,221}
\definecolor{mlp_code1}{RGB}{102,95,153}
\definecolor{mygray}{HTML}{f0f0f0}

\definecolor{token_blue}{RGB}{84, 120, 140}

\usepackage{bbding}
\usepackage{fontawesome}
\usepackage{float}

\newlength\savewidth

\newcolumntype{x}[1]{>{\centering\arraybackslash}p{#1pt}}
\newcolumntype{y}[1]{>{\raggedright\arraybackslash}p{#1pt}}
\newcolumntype{z}[1]{>{\raggedleft\arraybackslash}p{#1pt}}

\renewcommand{\paragraph}[1]{\vspace{1.25mm}\noindent\textbf{#1}}

\usepackage{algorithm}
\usepackage{listings}

\definecolor{codeblue}{rgb}{0.25, 0.5, 0.5}
\definecolor{codekw}{rgb}{0.35, 0.35, 0.75}
\lstdefinestyle{Pytorch}{
    language = Python,
    backgroundcolor = \color{white},
    basicstyle = \fontsize{9pt}{8pt}\selectfont\ttfamily\bfseries,
    columns = fullflexible,
    aboveskip=1pt,
    belowskip=1pt,
    breaklines = true,
    captionpos = b,
    commentstyle = \color{codeblue},
    keywordstyle = \color{codekw},
}

\definecolor{green}{HTML}{009000}
\definecolor{red}{HTML}{ea4335}

\usepackage{algorithmic}
\usepackage{placeins}
\usepackage{amsthm}
\newtheorem{proposition}{Proposition}
\newtheorem{lemma}{Lemma}
\theoremstyle{remark}
\newtheorem*{remark}{Remark}


\title{SPW-Nav: A Streaming Panoramic World Model for Language-Guided Navigation}
\author[1,2,*]{Yunheng Liu}
\author[1,2,3,*]{Ziqi Cai}
\author[1,2]{Siqi Yang}
\author[2]{Yimu Wang}
\author[2]{Minggui Teng}
\author[1]{Jiaming Tan}
\author[2,\dagger]{Shuchen Weng}
\author[3]{Erwin Wu}
\author[1,\ddagger]{Kaipeng Zhang}
\author[2,\dagger]{Boxin Shi}
\affiliation[1]{Alaya Lab}
\affiliation[2]{Peking University}
\affiliation[3]{Institute of Science Tokyo}
\abstract{
Language-guided panoramic video generation benefits various downstream applications, such as interactive 3D scene exploration, virtual reality experiences, and embodied agent training. Existing panoramic generators follow predefined trajectories, and interactive world models act through low-level actions in perspective views. We propose SPW-Nav, a streaming panoramic world model that understands movement instructions and streams one minute of 2K 360\textdegree{} video in real time from a single panorama. SPW-Nav interprets each instruction in the previously generated panorama as camera motion. Spherical rotation decoupling applies rotation exactly on the sphere, pose-aligned conditioning keeps translation inputs bounded over long streams, and a multi-term memory with a few-step generator continues the scene as instructions change. We also build SPW-NavSet, panoramic videos with camera trajectories and verified instructions. Driven by language, SPW-Nav outperforms prior panoramic generators in camera-following accuracy and video quality, and supports on-the-fly instruction switching.
}

\github{\url{https://alaya-lab.github.io/SPW-Nav}}
\date{October 6, 2026}

\begin{document}
\maketitle
\begingroup
\renewcommand{\thefootnote}{\fnsymbol{footnote}}
\footnotetext[1]{Equal Contribution}
\footnotetext[3]{Project Leader}
\footnotetext[2]{Corresponding Authors}
\endgroup

\suppressfloats[t]
\begin{figure}[t]
\centering
\includegraphics[width=\linewidth]{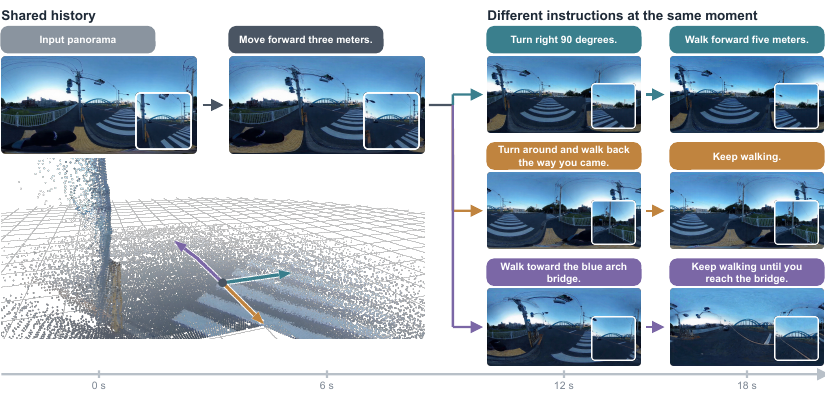}
\caption{
Language-guided exploration with SPW-Nav.
\textbf{\textit{Top left.}} a shared history generated from a single panorama. \textbf{\textit{Bottom left.}} a point cloud reconstructed from the generated frames, showing the motion directions of the three candidate routes in the scene. \textbf{\textit{Right.}} three different instructions given at the same moment continue this history, and each row shows the frames generated after its two instructions. Every frame shows the 360\textdegree{} panorama with its front view inset.
}
\label{fig:teaser}
\end{figure}

\section{Introduction}
\label{sec:introduction}

Generative world models let users and agents explore virtual environments, and exploration requires control over where to move and what to observe next. Panoramic video covers the surroundings in all directions, and natural language expresses movement without camera poses. Language-guided exploration therefore calls for continuous panoramic video that follows a sequence of instructions and stays consistent as the route changes.

Prior approaches address only parts of this interaction. Text-to-video models~\citep{ho2022video, singer2022make} condition an entire video on a single prompt. Camera-controlled panoramic generators~\citep{ji2025campvg, liu2026omniroam, li2026panoworld} follow predefined trajectories, while navigation and interactive world models~\citep{bar2024navigation, hafner2020dream, bruce2024genie, zhang2025matrixgame, he2025matrixgame2, mao2025yume} and vision-and-language agents~\citep{anderson2018vln, savva2019habitat} act through low-level actions (\eg, keyboard inputs), mostly in forward views. None of these video generators understands instructions that refer to scene content (\eg, walking to a door behind the camera), and this semantic gap motivates SPW-Nav.

Navigating a generated panoramic world raises three challenges. \textbf{\textit{(i) Scene-dependent navigation.}} The meaning of an instruction depends on the panorama the model itself generates and therefore cannot be turned into motion in advance. \textbf{\textit{(ii) Camera representation.}} A panoramic turn only rearranges visible content, yet existing generators synthesize it, and open-ended navigation drives world-frame poses beyond the training range. \textbf{\textit{(iii) Translation following.}} The camera pose is a weak signal next to rich visual context, and 2K streaming leaves few network evaluations per frame.

In this paper, we propose SPW-Nav, a \textbf{S}treaming \textbf{P}anoramic \textbf{W}orld model for language-guided \textbf{Nav}igation. As illustrated in Figure~\ref{fig:teaser}, SPW-Nav understands the semantics of movement instructions (\eg, walking to the door) and generates one minute of 360\textdegree{} video at 2K resolution (\(1024 \times 2048\)) and 26.6 FPS from a single panorama. \textbf{\textit{(i) Scene-aware navigation.}} A vision-language backbone reads the previously generated panorama and turns each instruction into an explicit displacement or a localized goal. \textbf{\textit{(ii) Decoupled camera representation.}} Spherical rotation decoupling applies the commanded rotation exactly by resampling the sphere, leaving only translation to the video backbone. Pose-aligned conditioning expresses this translation relative to the previous chunk and keeps the pose inputs bounded however long the navigation runs. \textbf{\textit{(iii) Enhanced translation following.}} Noise-tail trajectory learning focuses training and distillation on high noise levels, where the pose is the main motion cue, and opposite-trajectory guidance contrasts the commanded translation with its reverse at inference. A distilled pyramid generator with sparse attention~\citep{li2026solattn} and a multi-term memory sustain these one-minute streams. The explicit motion representation lets SPW-Nav learn navigation and generation from complementary data in SPW-NavSet.

Our contributions are summarized as follows.
\begin{itemize}
    \item We propose SPW-Nav, the first streaming panoramic world model for language-guided navigation, generating one-minute 2K 360\textdegree{} streams at 26.6 FPS. We also build SPW-NavSet, 233 hours of panoramic video with trajectories and verified instructions.
    \item We propose spherical rotation decoupling, which applies rotation exactly on the sphere so that the video backbone models translation alone, and pose-aligned conditioning, which keeps translation inputs bounded however far the navigation goes.
    \item We propose noise-tail trajectory learning and opposite-trajectory guidance to strengthen translation following, and enable streaming by making the video backbone autoregressive with a multi-term memory and distilling it to two steps per pyramid scale.
\end{itemize}

\section{Related work}
\label{sec:related_work}

\textbf{Panoramic video generation.} Spherical networks~\citep{cohen2018sphericalcnn, coors2018spherenet, su2017spherical} and 360\textdegree{} scene understanding~\citep{sun2021hohonet} process equirectangular imagery. Diffusion models generate panoramic images~\citep{zhang2024taming}, surround-view driving video~\citep{wen2024panacea}, and panoramic video~\citep{zhang2025panodit}, and have been extended to 360\textdegree{} video with latitude- and longitude-aware mechanisms~\citep{xia2025panowan} and to high-resolution cubemap autoregression~\citep{li2026cubecomposer}. Panoramic depth estimation~\citep{reyarea2022360monodepth} provides geometric priors, and video generation models in turn recover HDR environment maps of the full surrounding scene~\citep{liang2025luxdit, cai2026vlite}. None of them follows the camera motion of a navigation route, whereas SPW-Nav treats each equirectangular frame as a function on the sphere, on which rotation becomes exact resampling.

\textbf{Camera-controlled panoramic synthesis.} CamPVG~\citep{ji2025campvg} conditions panoramic video on camera motion through spherical epipolar attention, OmniRoam~\citep{liu2026omniroam} combines trajectory-controlled preview with temporal refinement, and PanoWorld~\citep{li2026panoworld} uses dense ray conditioning with geometry-aware memory. These methods take predefined trajectories rather than inferring motion from language, and GenDoP~\citep{zhang2025gendop} generates camera trajectories from text but no video. Perspective camera-controlled generators~\citep{he2024cameractrl, zheng2024cami2v, bai2025recammaster, zhou2025seva, huang2025voyager} follow trajectories within a limited field of view, where a turn must synthesize new content, and LiVER~\citep{cai2026liver} conditions generation on layout, lighting, and camera controls rendered from an explicit 3D scene proxy. SPW-Nav instead infers the movement from language in the previously generated panorama and retains context in a streaming memory as instructions change.

\textbf{Navigation and interactive world models.} World models predict future observations from past observations and actions, from latent dynamics~\citep{ha2018world, hafner2020dream, hafner2021mastering, hafner2023dreamerv3, micheli2023iris, schrittwieser2020muzero} to high-resolution video diffusion~\citep{blattmann2023align}. Autoregressive video diffusion with few-step distillation further enables streaming generation~\citep{chen2024diffusionforcing, yin2025causvid, huang2025selfforcing, yuan2026helios}. NWM~\citep{bar2024navigation} predicts egocentric observations from navigation actions, and Genie~\citep{bruce2024genie}, Matrix-Game~\citep{zhang2025matrixgame, he2025matrixgame2}, and Yume~\citep{mao2025yume} generate explorable perspective worlds controlled by latent or keyboard actions. Such actions prescribe camera motion without scene semantics and cannot express instructions about scene content (\eg, walking to a door). Streaming benchmarks~\citep{liu2026streamavbench} report drift and slow interactive responses. SPW-Nav instead interprets each instruction semantically, localizing referenced objects and places in the generated panorama, and learns navigation and generation from complementary data.

\section{SPW-NavSet}
\label{sec:dataset}

Real-world panoramic videos~\citep{tan2026alayavista} cover diverse scenes with estimated camera trajectories, while offering limited control over camera motion and few paired movement instructions. We therefore curate SPW-NavSet from filtered real outdoor videos and indoor videos rendered along controllable camera paths, and pair panoramas with movement instructions.

\textbf{Panoramic videos.} The video backbone is trained on panoramic videos with camera trajectories. These include indoor videos rendered with Habitat~\citep{savva2019habitat} from HM3D~\citep{ramakrishnan2021hm3d}, Replica~\citep{straub2019replica}, and ReplicaCAD~\citep{szot2021habitat} scenes with exact camera poses, as well as real outdoor street videos from MUGEN~\citep{tan2026alayavista} with estimated poses. We remove clips whose camera stays static or only rotates in place, and resample each frame to the orientation of the first one. The videos total about 18K clips and 233 hours at \(1024 \times 2048\) (appendix Figure~\ref{fig:dataset_samples}).

\textbf{Instruction annotations.} The vision-language backbone is trained on panoramas paired with movement instructions and motion targets. Explicit instructions (\eg, moving forward three meters or turning back) come from camera trajectories with varied phrasings. Goal instructions name a visible object or place (\eg, a door), localized by a bounding box and verified by vision-language models. We manually check 200 random samples and hold out scenes for validation and testing.

\textbf{SPW-NavBench.} This test set contains 100 real-world outdoor clips, disjoint from all training data down to the source videos. Each clip provides an initial panorama, its ground-truth camera trajectory, and two instructions. A motion instruction describes this trajectory, and a navigation instruction is either an explicit instruction or a goal instruction (see appendix).

\begin{figure}[t]
\centering
\includegraphics[width=\linewidth]{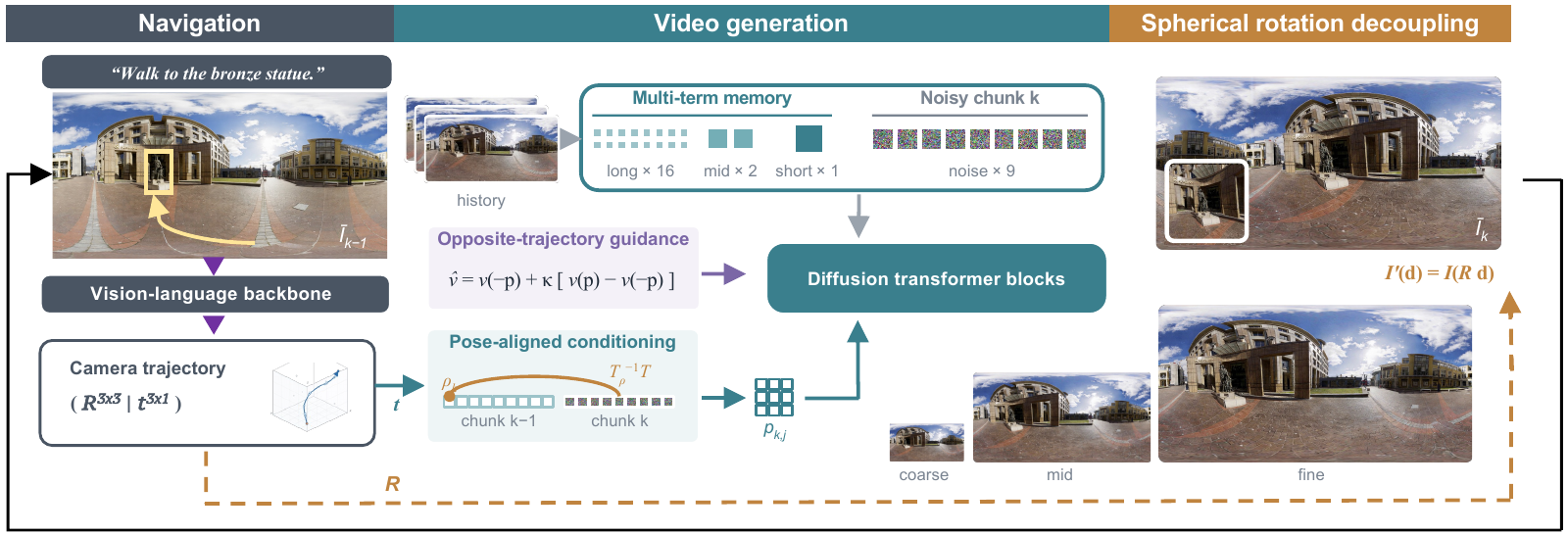}
\caption{
Overview of the SPW-Nav pipeline.
\textbf{Navigation.} The vision-language backbone localizes the referenced target (box) in the latest panorama \(\bar{I}_{k-1}\) and outputs a camera trajectory \((R, \mathbf{t})\).
\textbf{Video generation.} Memory tokens at long, mid, and short range and the noisy latents of chunk \(k\) enter the diffusion transformer, and pose tokens \(\mathbf{p}_{k,j}\) express \(\mathbf{t}\) relative to the reference frame \(\rho_k\). The chunk is denoised from coarse to fine, with guidance \(\hat{v}\) at the coarsest scale.
\textbf{Spherical rotation decoupling.} The rotation \(R\) bypasses the video backbone (dashed line) and is applied to the rotation-free output by spherical resampling. The last frame \(\bar{I}_k\) is fed back for the next instruction (solid line).
}
\label{fig:architecture}
\end{figure}

\section{Methodology}
\label{sec:methodology}

As illustrated in Figure~\ref{fig:architecture}, SPW-Nav generates each chunk of video from the latest panorama, the active instruction, and a memory of generated frames.

\subsection{Preliminaries}
\label{sec:method:prelim}

Given a panorama \(I_0\) at pose \(T_0\) and a sequence of movement instructions \(\{u_1, \ldots, u_K\}\), SPW-Nav outputs a stream of panoramic video segments \(\{V_1, \ldots, V_K\}\) that extends as new instructions arrive. Segment \(V_k\) carries out \(u_k\) and is generated in chunks of \(F = 33\) frames. Each frame is an equirectangular panorama \(I\) of resolution \(H \times W = 1024 \times 2048\), whose pixel \((x, y)\) observes the unit ray
\begin{equation}
\mathbf{d}(x, y) = (\cos\phi \sin\theta,\; \sin\phi,\; \cos\phi \cos\theta),
\qquad
\theta = 2\pi\Big(\tfrac{x + 1/2}{W} - \tfrac{1}{2}\Big),
\quad
\phi = \pi\Big(\tfrac{y + 1/2}{H} - \tfrac{1}{2}\Big),
\label{eq:erp}
\end{equation}
with longitude \(\theta\), latitude \(\phi\), and camera axes pointing right, down, and forward. Frame \(I_n\) of the stream has camera pose \(T_n = (R_n, \mathbf{t}_n) \in \mathrm{SE}(3)\).

\subsection{Navigation in the current view}
\label{sec:method:navigation}

Some instructions (\eg, moving forward or turning left) describe movement relative to the current viewpoint, while others (\eg, walking to the door) refer to content that only the panorama reveals. SPW-Nav therefore interprets each instruction \(u_k\) in the latest generated panorama \(\bar{I}_{k-1}\) (\ie, the last frame of the previous segment, with \(\bar{I}_0 = I_0\)) and expresses it as one of two motion targets relative to the current camera (Figure~\ref{fig:architecture}, left). A displacement target lists up to three waypoints \((x, z, \psi)\) in meters and degrees relative to the current camera, which covers multi-step movement, stopping, and corrections. A goal target marks the named object or place with a bounding box, which may wrap across the image border, and an approach distance \(r\). If \(c\) is the normalized horizontal center of the box, the camera turns to the bearing \(\psi^\star = 2\pi(c - 1/2)\) and then moves forward by \(r\). Either target is converted into the camera trajectory
\begin{equation}
A_k = \{T_{k,1}, \ldots, T_{k,L}\},
\qquad
T_{k,j} = T_{k-1} \cdot \Delta T_{k,j},
\qquad
\Delta T_{k,j} =
\begin{bmatrix}
\Delta R_{k,j} & \Delta\mathbf{t}_{k,j} \\
\mathbf{0}^\top & 1
\end{bmatrix}
\in \mathrm{SE}(3),
\label{eq:trajectory}
\end{equation}
where \(T_{k-1}\) is the camera pose at the end of the previous segment (\(T_0\) for the first segment), and \(\Delta T_{k,j}\) interpolates the waypoints over the segment.

\textbf{Architecture and training.} The vision-language backbone is Qwen3-VL-8B~\citep{bai2025qwen3vl} with rank-16 LoRA on the attention layers of its language model and a frozen vision encoder. It is trained on the instruction annotations of SPW-NavSet (Section~\ref{sec:dataset}) with the next-token objective on the motion target, conditioned on the panorama and the instruction.

\subsection{Pose-aligned video generation}
\label{sec:method:generation}

\textbf{Spherical rotation decoupling (SRD).} A camera rotation changes which direction each pixel observes but reveals no new content in a panorama (Figure~\ref{fig:architecture}, right). Rotating the camera by \(R\) maps a panorama \(I\) to \(I'(\mathbf{d}) = I(R\,\mathbf{d})\), evaluated at every pixel ray (Eq.~\ref{eq:erp}) with bilinear sampling that wraps around the longitude seam. In training, each frame is resampled to the orientation of the first frame, \(\widetilde{I}_n(\mathbf{d}) = I_n(R_n^\top R_0\,\mathbf{d})\), and the trajectory reduces to the translation \(\widetilde{\mathbf{t}}_n = R_0^\top(\mathbf{t}_n - \mathbf{t}_0)\). At inference, frames are generated in the orientation of the initial panorama and the rotation of \(A_k\) is restored after decoding. Rotation is thus exact up to interpolation (appendix Proposition~\ref{prop:srd}), leaving the generator to model only the parallax of translation.

\textbf{Pose-aligned conditioning (PAC).} World-frame poses of the same motion differ across places and grow with the distance traveled. PAC therefore anchors every pose to a recent view (Figure~\ref{fig:architecture}, middle). Every camera pose of chunk \(k\) is expressed as \(\widetilde{T}_n = T_{\rho_k}^{-1} T_n\) relative to \(\rho_k\), the first frame of the preceding chunk (\(\rho_1\) is the first frame of chunk \(1\)). In the rotation-free frame, this pose has an identity rotation block and translation \(s\,(\widetilde{\mathbf{t}}_n - \widetilde{\mathbf{t}}_{\rho_k})\) with a fixed scale \(s\). Since \(\rho_k\) lies in the memory (Section~\ref{sec:method:streaming}), the motion is stated relative to a visible view. Each of the \(9\) latent frames of the chunk receives its flattened \(3 \times 4\) pose \(\mathbf{p}_{k,j} \in \mathbb{R}^{12}\), whose identity rotation block is kept for generality, and \(\mathbf{p}_k\) stacks them. In every transformer block \(b\), a zero-initialized linear layer maps \(\mathbf{p}_{k,j}\) to a token offset
\begin{equation}
\mathbf{h}^{(b)}_{k,j,\ell} \leftarrow \mathbf{h}^{(b)}_{k,j,\ell} + W^{(b)} \mathbf{p}_{k,j} + \mathbf{c}^{(b)},
\label{eq:conditioning}
\end{equation}
which is added to every spatial token \(\ell\) of latent frame \(j\) and leaves the pretrained backbone unchanged at initialization. A relative translation of \((1,0,0)\) thus always means moving right in the current view. Since the reference is at most two chunks earlier, the inputs are invariant to rigid transformations of the world frame and stay within the training range.

\textbf{Noise-tail trajectory learning (NTL).} High noise levels largely determine camera motion and coarse structure~\citep{liu2026tars, zhao2026phyworld}. NTL therefore emphasizes them both when the autoregressive teacher learns the trajectory conditioning and when the few-step student is distilled (Section~\ref{sec:method:streaming}). In distillation, each student sample \(\hat{z}\) is re-noised to \(\hat{z}_\sigma = (1 - \sigma)\hat{z} + \sigma \epsilon\) and scored by a critic and the teacher. In both stages, a fraction \(\lambda = 0.8\) of the noise levels comes from the tail \([\sigma_h, \sigma_{\max}] = [0.9, 0.98]\) and the rest from the default distribution \(p_0\) of that stage,
\begin{equation}
\sigma \sim p_\lambda = \lambda\, \mathcal{U}[\sigma_h, \sigma_{\max}] + (1 - \lambda)\, p_0 .
\label{eq:loss}
\end{equation}
At high noise the sample shows almost no layout. The training target and the score difference thus mainly concern where content should move under the relative poses. At low noise, in contrast, the layout is already visible and the pose adds little information. Beyond improving navigation, NTL also speeds up the convergence of trajectory learning.

\textbf{Opposite-trajectory guidance (OTG).} In a panorama, moving forward and moving backward produce mirrored flow fields, with expansion at the heading and contraction at its antipode. OTG therefore contrasts the commanded translation with its reverse at the coarsest scale of the generator (Section~\ref{sec:method:streaming}), where the velocity is
\begin{equation}
\hat{v} = v_\theta(z_\sigma, \sigma, \mathcal{M}_k, -\mathbf{p}_k) + \kappa \big( v_\theta(z_\sigma, \sigma, \mathcal{M}_k, \mathbf{p}_k) - v_\theta(z_\sigma, \sigma, \mathcal{M}_k, -\mathbf{p}_k) \big),
\label{eq:guidance}
\end{equation}
where \(-\mathbf{p}_k\) negates the translation and keeps all other inputs. The difference cancels components shared by both directions (\eg, scene appearance) and amplifies the one that depends on the direction of travel. Finer scales refine appearance with the plain conditional velocity, and we use \(\kappa = 2\).

\subsection{Streaming generation with memory}
\label{sec:method:streaming}

When a new instruction arrives, the model must continue from the previous chunk without losing visual context and with few network evaluations per chunk. We build the streaming generator on the autoregressive recipe of Helios~\citep{yuan2026helios} and detail the loop in Algorithm~\ref{alg:streaming}.

\textbf{Multi-term memory.} The memory \(\mathcal{M}_k\) (Figure~\ref{fig:architecture}, middle) holds the generated stream in three terms, the most recent latent frame, the two before it, and sixteen earlier ones, with older terms patchified more coarsely. A fixed anchor also keeps the first frame of the stream, which keeps the initial view visible throughout the stream. Memory tokens enter the transformer alongside the chunk tokens at noise level zero, and the first chunk starts from the input panorama repeated as a static clip. Since generation runs in the rotation-free orientation and each remembered panorama covers all directions, rotation never disturbs the memory and recent appearance in every direction stays available.

\textbf{Few-step generation.} Each chunk is denoised from coarse to fine over a spatial pyramid of three scales, which splits the noise range into equal thirds. The student uses two steps per scale (four for the first chunk). After the first chunk, each chunk takes six network evaluations plus two for the guidance of Eq.~\ref{eq:guidance}. The student is initialized by ODE regression onto the teacher and trained with distribution matching distillation~\citep{yin2024dmd}, with the teacher as the real score and a critic tracking the student.

\textbf{Streaming training.} At inference the memory holds generated frames, whose errors differ from those of real video. ODE regression therefore uses histories from the model's own rollouts, and distillation corrupts ground-truth histories with noise, downsampling, and saturation shifts with probability \(0.9\), such that the student learns to continue from imperfect memory. Chunks do not overlap, and the memory alone carries continuity across chunks and instructions.

\section{Experiments}
\label{sec:experiments}

\subsection{Experimental setup}
\label{sec:exp:setup}

\textbf{Implementation details.}  The video backbone is Wan2.2-TI2V-5B~\citep{wan2025wan} fine-tuned with LoRA on panoramic video and produces \(1024 \times 2048\) video with a playback rate of 16 FPS. It is trained without language on the rotation-free videos of SPW-NavSet in autoregressive, pyramid, and trajectory-conditioned stages, and then distilled into a few-step student (see appendix).

\textbf{Baselines.}  We compare with generators that accept camera motion, each through its own interface. The baselines are NWM~\citep{bar2024navigation}, CubeComposer~\citep{li2026cubecomposer}, OmniRoam~\citep{liu2026omniroam}, and the Action and Causal variants of PanoWorld~\citep{li2026panoworld}, denoted PanoWorld-A.\ and PanoWorld-C. CubeComposer completes a perspective video into a panorama and takes neither trajectory nor language. It therefore appears only in camera-controlled generation. For camera-controlled and long video generation, we add Matrix-Game 3.0~\citep{wang2026matrixgame3}, a streaming world model with long-horizon memory driven by keyboard and mouse actions in a perspective view. Matrix-Game 3.0 is evaluated in its native view and as a panorama completed by CubeComposer. The comparison with perspective generators Stable Virtual Camera~\citep{zhou2025seva}, ReCamMaster~\citep{bai2025recammaster}, CamI2V~\citep{zheng2024cami2v}, HunyuanWorld-Voyager~\citep{huang2025voyager}, and CameraCtrl~\citep{he2024cameractrl} appears in the appendix.

\textbf{Metrics.}  Camera-following accuracy is measured on the camera motion estimated from the generated video~\citep{wang2025vggt} by the trajectory direction error (TDE), the cumulative and per-interval rotation errors (RE and RPE\textsubscript{R}), the error of the mean cumulative rotation (\(\Delta\)Rot.), a scale-free trajectory shape error (ATE), and motion accuracy (MA). Video quality is measured by PSNR, LPIPS, and FVD. The appendix gives the exact definitions.

\subsection{Language-guided navigation}
\label{sec:exp:e2e}

We first evaluate the full system, from a language instruction to panoramic video, and start every method from the same motion instruction of each SPW-NavBench clip. No prior panoramic generator understands movement instructions semantically, which is the gap SPW-Nav addresses. Each baseline therefore receives the instruction through the closest interface it offers. PanoWorld and OmniRoam take it in their text prompt with the motion preset matching its keywords, and NWM takes the actions planned by our vision-language backbone without fine-tuning.

\begin{table}[t]
\centering
\caption{
Language-guided navigation on SPW-NavBench. Each method receives the same instruction through its own interface. The italic row uses the base vision-language model and is not ranked. In all tables, the best is bold and the second best underlined.
}
\label{tab:e2e}
\footnotesize
\setlength{\tabcolsep}{2pt}
\begin{tabular*}{\linewidth}{@{\extracolsep{\fill}}lccccccccc@{}}
\toprule
Method & \(\Delta\)Rot.\,\(\downarrow\) & TDE\,\(\downarrow\) & RE\,\(\downarrow\) & RPE\textsubscript{R}\,\(\downarrow\) & ATE\,\(\downarrow\) & MA\,\(\uparrow\) & PSNR\,\(\uparrow\) & LPIPS\,\(\downarrow\) & FVD\,\(\downarrow\) \\
\midrule
NWM~\citep{bar2024navigation}          & 82.74 & 84.79 & 83.87 & 19.18 & 23.48 & 0.064 & 9.90 & N/A & 1216.1 \\
OmniRoam~\citep{liu2026omniroam}       & 13.00 & \underline{62.07} & 13.46 & 1.603 & 4.75 & 0.256 & \underline{11.47} & 0.654 & 694.9 \\
PanoWorld-C.~\citep{li2026panoworld} & 13.19 & 63.36 & 13.22 & \underline{1.517} & \underline{4.25} & \underline{0.264} & 11.29 & 0.645 & 638.5 \\
PanoWorld-A.~\citep{li2026panoworld} & \underline{12.76} & 63.03 & \underline{12.99} & 1.545 & \textbf{4.13} & 0.257 & 11.14 & \underline{0.633} & \underline{628.8} \\
\midrule
\textbf{SPW-Nav} & \textbf{1.45} & \textbf{56.66} & \textbf{3.02} & \textbf{0.780} & 6.28 & \textbf{0.305} & \textbf{15.69} & \textbf{0.398} & \textbf{351.6} \\
\textit{W/o navigation fine-tuning} & 0.79 & 100.74 & 3.84 & 1.379 & 10.34 & 0.028 & 15.23 & 0.413 & 410.7 \\
\bottomrule
\end{tabular*}
\end{table}

Table~\ref{tab:e2e} shows that SPW-Nav leads on every measure except ATE, where the keyword-matched presets give PanoWorld a predefined trajectory shape while SPW-Nav infers the trajectory from the instruction and the scene. SPW-Nav reaches a TDE of \(56.66^\circ\) against at least \(62.07^\circ\) and an FVD of 351.6 against at least 628.8, and its RE is less than one quarter of that of any baseline, since SRD applies the interpreted rotation exactly while the baselines rotate only slightly. Both PanoWorld variants also lose about 5 dB of PSNR relative to Table~\ref{tab:main}, whereas SPW-Nav loses less than 1 dB. Replacing the fine-tuned backbone with its base model raises TDE to \(100.74^\circ\) and lowers MA to 0.028, since the base model often misreads the commanded direction.

\textbf{Navigation accuracy.} \label{sec:exp:language}
On the navigation instructions of SPW-NavBench (40 explicit and 60 goal instructions, see appendix), the vision-language backbone interprets all explicit instructions correctly and localizes 62\% of the visible targets. On the held-out test scene, fine-tuning raises the accuracy from 0.663 to 0.925 for explicit instructions and from 0.154 to 0.692 for goal instructions. Without the panorama, goal localization drops to zero, which indicates that goals are located in the image rather than guessed from the words. On the motion instructions, it matches the commanded direction for 98\% of the instructions with a mean distance error of 0.03\,m relative to the commanded distance.

\subsection{Camera-controlled generation}
\label{sec:exp:trajectory}

We then isolate the video backbone by giving every method the ground-truth camera trajectory instead of language. SPW-Nav and OmniRoam take the full trajectory, the PanoWorld variants take its translation with a fixed heading, NWM takes the corresponding actions with three ground-truth past views, and CubeComposer takes a perspective video rendered along the commanded rotation.

\begin{table}[t]
\centering
\caption{
Camera-controlled panoramic generation from ground-truth trajectories.
}
\label{tab:main}
\footnotesize
\setlength{\tabcolsep}{2pt}
\begin{tabular*}{\linewidth}{@{\extracolsep{\fill}}lccccccccc@{}}
\toprule
Method & \(\Delta\)Rot.\,\(\downarrow\) & TDE\,\(\downarrow\) & RE\,\(\downarrow\) & RPE\textsubscript{R}\,\(\downarrow\) & ATE\,\(\downarrow\) & MA\,\(\uparrow\) & PSNR\,\(\uparrow\) & LPIPS\,\(\downarrow\) & FVD\,\(\downarrow\) \\
\midrule
NWM~\citep{bar2024navigation}          & 82.55 & 86.19 & 85.64 & 19.12 & 22.89 & 0.081 & 10.68 & N/A & 1176.1 \\
CubeComposer~\citep{li2026cubecomposer} & 13.97 & 91.40 & 13.97 & 1.521 & 22.60 & 0.073 & 14.61 & 0.565 & 973.1 \\
OmniRoam~\citep{liu2026omniroam}       & \underline{12.30} & 77.12 & 15.07 & 2.042 & 10.40 & 0.111 & 15.36 & 0.489 & 610.5 \\
PanoWorld-C.~\citep{li2026panoworld} & 13.46 & 26.28 & \underline{13.47} & 1.528 & \underline{4.24} & \underline{0.688} & 16.12 & 0.460 & 462.0 \\
PanoWorld-A.~\citep{li2026panoworld} & 13.61 & \textbf{24.26} & 13.62 & 1.530 & \textbf{3.93} & \textbf{0.726} & \underline{16.24} & \underline{0.425} & \underline{384.2} \\
Matrix-Game 3.0 + CubeComposer & 13.13 & 92.91 & 13.14 & \underline{1.479} & 20.24 & 0.080 & 14.59 & 0.587 & 892.8 \\
\midrule
\textbf{SPW-Nav} & \textbf{0.42} & \underline{24.30} & \textbf{1.84} & \textbf{0.282} & 5.00 & 0.631 & \textbf{16.48} & \textbf{0.382} & \textbf{308.0} \\
\bottomrule
\end{tabular*}
\end{table}

As Table~\ref{tab:main} shows, SPW-Nav rotates by \(13.59^\circ\) for a commanded rotation of \(14.01^\circ\), while the panoramic baselines rotate by less than \(2^\circ\) and NWM overshoots. Its RE is thus less than one seventh of that of PanoWorld-A. This margin holds by construction, since SRD applies the turn by resampling, whereas the baselines either fix the heading or reproduce little of the rotation they receive. For translation, SPW-Nav matches PanoWorld-A.\ in direction, while PanoWorld-A.\ reaches a lower ATE and a higher MA. PanoWorld-A.\ specializes in fixed-heading 960\(\times\)480 clips, whereas SPW-Nav streams 2K panoramas with arbitrary turns and a few-step generator. This gap therefore reflects a trade-off between specialization and generality. SPW-Nav achieves the best video quality on all three measures, and Figure~\ref{fig:qual_trajectory} shows qualitative comparisons. Against the perspective generators, evaluated on a perspective view cropped from the panorama (see appendix), SPW-Nav reaches the lowest RE and RPE\textsubscript{R} and the second-best PSNR, LPIPS, and FVD.

\begin{figure}[t]
\centering
\includegraphics[width=\linewidth]{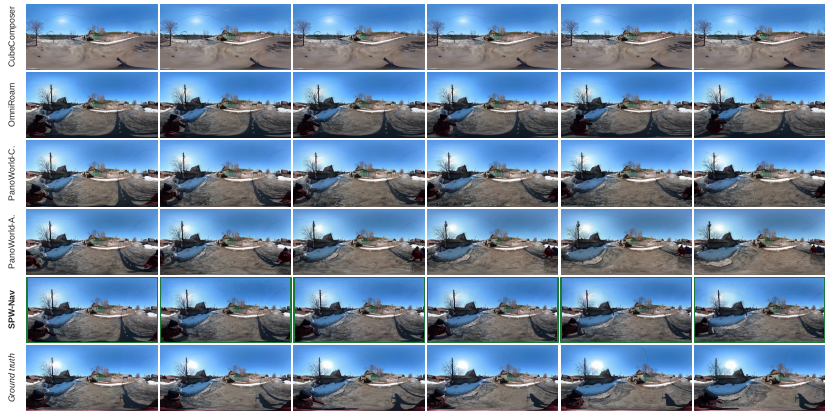}
\caption{
Qualitative comparison with ground-truth trajectories.
Each row shows the panoramic video generated by one method at six evenly spaced timestamps from left to right. The last row shows the ground truth, and SPW-Nav is outlined in green.
}
\label{fig:qual_trajectory}
\end{figure}
\subsection{Applications}
\label{sec:exp:streaming}
\label{sec:exp:loop}

\textbf{Long video generation.} SPW-Nav streams one minute of continuous 2K panoramic video (960 frames at 16 FPS) from a single panorama, including panoramas outside its training data (appendix Figures~\ref{fig:zeroshot} and~\ref{fig:longvideo}). Such long streams must stay consistent when a route returns to its start. For each of 20 scenes, we run two closed routes (\ie, the final pose equals the initial one) of twelve chunks (see appendix). Memory gain measures how much closer a revisited view is to its earlier observation than a non-revisiting view, and closure errors compare the final and initial poses. Table~\ref{tab:loop} compares SPW-Nav with OmniRoam, PanoWorld-A., HunyuanWorld-Voyager, and Matrix-Game 3.0, where the last two are evaluated in their native perspective view. SPW-Nav closes the loop with a rotation error of \(1.62^\circ\), against \(8.91^\circ\) for HunyuanWorld-Voyager, the next best, and up to \(39.52^\circ\) for PanoWorld-A. It also reaches the lowest translation closure error, 30.7\%. Matrix-Game 3.0 reaches a higher memory gain than SPW-Nav (0.249 in its perspective view and 0.204 as a panorama, against 0.173), but its closure rotation error exceeds \(22^\circ\) in both settings.

\begin{table}[t]
\centering
\caption{
Long-horizon consistency on 40 closed routes of twelve chunks. \(^\dagger\)Native perspective view.
}
\label{tab:loop}
\small
\setlength{\tabcolsep}{2pt}
\begin{tabular*}{\linewidth}{@{\extracolsep{\fill}}lccc@{}}
\toprule
Method & Memory gain $\uparrow$ & Closure rot. (\(^\circ\)) $\downarrow$ & Closure trans. (\%) $\downarrow$ \\
\midrule
OmniRoam~\citep{liu2026omniroam} & 0.026 & 11.02 & 57.5 \\
PanoWorld-A.~\citep{li2026panoworld} & 0.084 & 39.52 & 59.0 \\
HunyuanWorld-Voyager\(^\dagger\)~\citep{huang2025voyager} & 0.087 & \underline{8.91} & 40.7 \\
Matrix-Game 3.0\(^\dagger\)~\citep{wang2026matrixgame3} & \textbf{0.249} & 28.41 & 32.7 \\
\quad + CubeComposer~\citep{li2026cubecomposer} & \underline{0.204} & 22.26 & \underline{32.0} \\
\midrule
\textbf{SPW-Nav} & 0.173 & \textbf{1.62} & \textbf{30.7} \\
\bottomrule
\end{tabular*}
\end{table}

\textbf{Instruction switching.} New instructions may arrive at any time. We run 20 scenes through four consecutive instructions spanning twelve chunks (about 25 seconds), where the vision-language backbone reads the last generated frame before every instruction and the memory holds only generated frames. Figure~\ref{fig:streaming} shows examples, and Table~\ref{tab:streaming} lists per-segment metrics. Instruction-following metrics fluctuate across segments, while FVD is higher after the first segment. Switches cause no visible jump, as inter-frame LPIPS across instruction boundaries is no higher than within segments. With eight H200 GPUs and sparse attention, SPW-Nav generates 26.6 frames per second, faster than the 16 FPS playback rate of its output.

\begin{figure}[t]
\centering
\includegraphics[width=\linewidth]{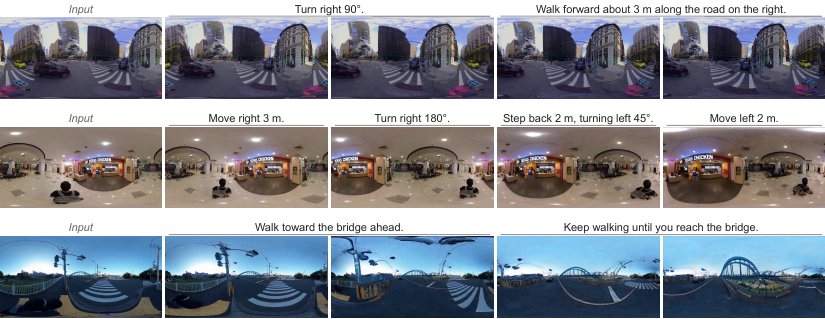}
\caption{
Streaming generation with instruction switching. The first column shows the input panorama, and each instruction above spans the frames generated while following it.
}
\label{fig:streaming}
\end{figure}

\subsection{Ablation study}
\label{sec:exp:analysis}

Table~\ref{tab:ablation} ablates NTL, OTG, and SRD. Without NTL, neither the teacher nor the student uses the noise tail. Disabling OTG means \(\kappa = 1\) in Eq.~\ref{eq:guidance}.

\textbf{Impact of OTG.} With NTL but without OTG, the video often moves in an unintended direction. Adding OTG cuts TDE from \(43.53^\circ\) to \(24.30^\circ\), raises motion accuracy, and lowers FVD. Larger guidance weights offer little further improvement in TDE but raise RE and degrade appearance (appendix Table~\ref{tab:guidance}). OTG adds only two network evaluations per chunk, both at the coarsest scale.

\textbf{Impact of NTL.} With OTG disabled, removing NTL raises RE from \(0.99^\circ\) to \(26.58^\circ\), PSNR drops by more than 3~dB, and FVD rises from 395.6 to 512.2. OTG then lowers TDE only from \(29.15^\circ\) to \(27.85^\circ\) and further degrades appearance. NTL is therefore needed both for motion accuracy and for the gain from OTG.

\textbf{Impact of SRD.} Without SRD, the model is trained on videos that keep their rotation and generates the rotation itself. This variant is also trained without NTL. Relative to the full model, its RE rises from \(1.84^\circ\) to \(31.31^\circ\), and its FVD from 308.0 to 596.5. This variant has worse rotation accuracy and video quality, although the comparison does not isolate SRD from NTL.

\begin{table}[t]
\centering
\caption{
Ablation on SPW-NavBench. The upper block varies NTL and OTG with SRD enabled. The lower row disables both SRD and NTL.
}
\label{tab:ablation}
\small
\setlength{\tabcolsep}{2pt}
\begin{tabular*}{\linewidth}{@{\extracolsep{\fill}}cccccccccc@{}}
\toprule
NTL & OTG & TDE\,\(\downarrow\) & MA\,\(\uparrow\) & ATE\,\(\downarrow\) & RE\,\(\downarrow\) & RPE\textsubscript{R}\,\(\downarrow\) & PSNR\,\(\uparrow\) & LPIPS\,\(\downarrow\) & FVD\,\(\downarrow\) \\
\midrule
 & & 29.15 & 0.469 & \textbf{4.80} & 26.58 & 3.157 & 13.69 & 0.494 & 512.2 \\
 & \checkmark & \underline{27.85} & \underline{0.517} & 5.64 & 28.13 & 3.350 & 13.26 & 0.514 & 562.3 \\
\checkmark & & 43.53 & 0.395 & 12.33 & \textbf{0.99} & \textbf{0.160} & \textbf{17.05} & \textbf{0.361} & \underline{395.6} \\
\checkmark & \checkmark & \textbf{24.30} & \textbf{0.631} & \underline{5.00} & \underline{1.84} & \underline{0.282} & \underline{16.48} & \underline{0.382} & \textbf{308.0} \\
\midrule
\multicolumn{2}{l}{\textit{W/o SRD}} & 27.99 & 0.521 & 5.99 & 31.31 & 3.801 & 12.97 & 0.518 & 596.5 \\
\bottomrule
\end{tabular*}
\end{table}

\section{Conclusion}
\label{sec:conclusion}

We present SPW-Nav, a streaming panoramic world model that interprets each language instruction in the previously generated panorama and streams one minute of 2K 360\textdegree{} video at 26.6 FPS. Spherical rotation decoupling makes rotation exact, pose-aligned conditioning keeps translation inputs bounded, and noise-tail trajectory learning with opposite-trajectory guidance strengthens translation following. SPW-Nav outperforms existing panoramic generators in camera-following accuracy and video quality and closes long routes with near-exact rotation, making language a practical interface for exploring generated worlds. Since generation runs faster than playback, new instructions can be issued while the stream plays. Consistency on distant revisits and the trajectory shape of translation remain open. In future work, a pose-indexed 3D memory could make distant revisits more consistent, and more diverse training trajectories could improve the shape of translation. Motion targets could also come from clicks on the panorama or external planners, since the video backbone receives camera trajectories rather than language.

\section*{Acknowledgments}
This study was carried out using the TSUBAME4.0 supercomputer at Institute of Science Tokyo.

\clearpage
\beginappendix
\counterwithin{figure}{section}
\counterwithin{table}{section}
\counterwithin{algorithm}{section}
\renewcommand{\thefigure}{\thesection\arabic{figure}}
\renewcommand{\thetable}{\thesection\arabic{table}}

\section{Additional results}
\label{app:results}
Upward and downward arrows mark metrics where larger and smaller values are better.

\FloatBarrier
\subsection{Zero-shot generalization to unseen scenes}
Figure~\ref{fig:zeroshot} shows streams generated from panoramas that are not part of the training data, covering indoor halls, city streets, waterfronts, and natural scenes.

\begin{figure}[!ht]
\centering
\includegraphics[width=\linewidth]{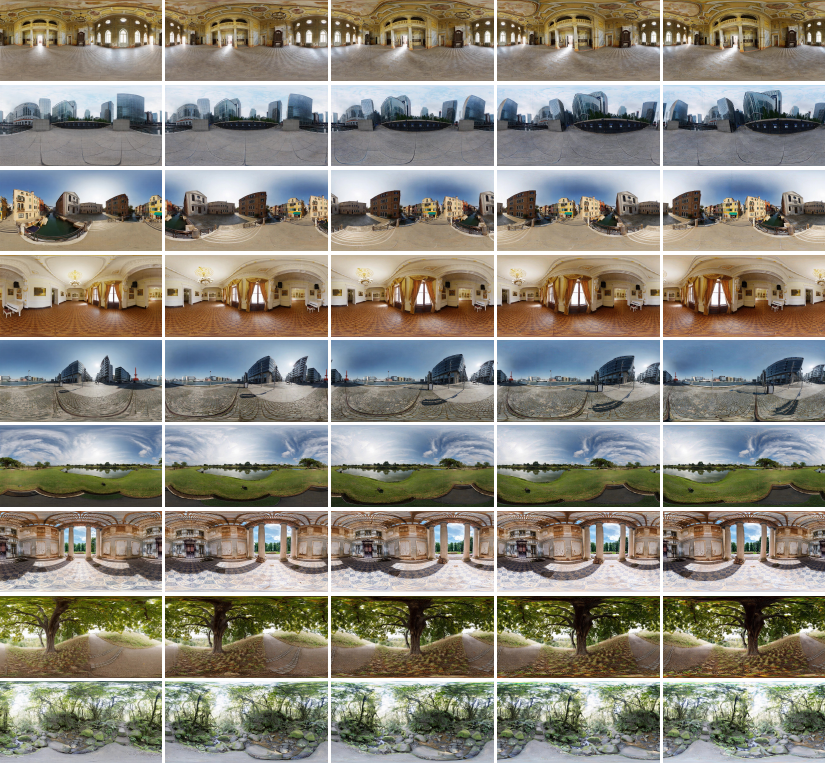}
\caption{
Zero-shot streams generated by SPW-Nav from panoramas outside its training data. Each row is one stream of 14 to 30 seconds with evenly spaced frames.
}
\label{fig:zeroshot}
\end{figure}

\FloatBarrier
\subsection{One-minute streams}
Figure~\ref{fig:longvideo} shows streams of about one minute, each generated continuously from a single panorama.

\begin{figure}[!ht]
\centering
\includegraphics[width=\linewidth]{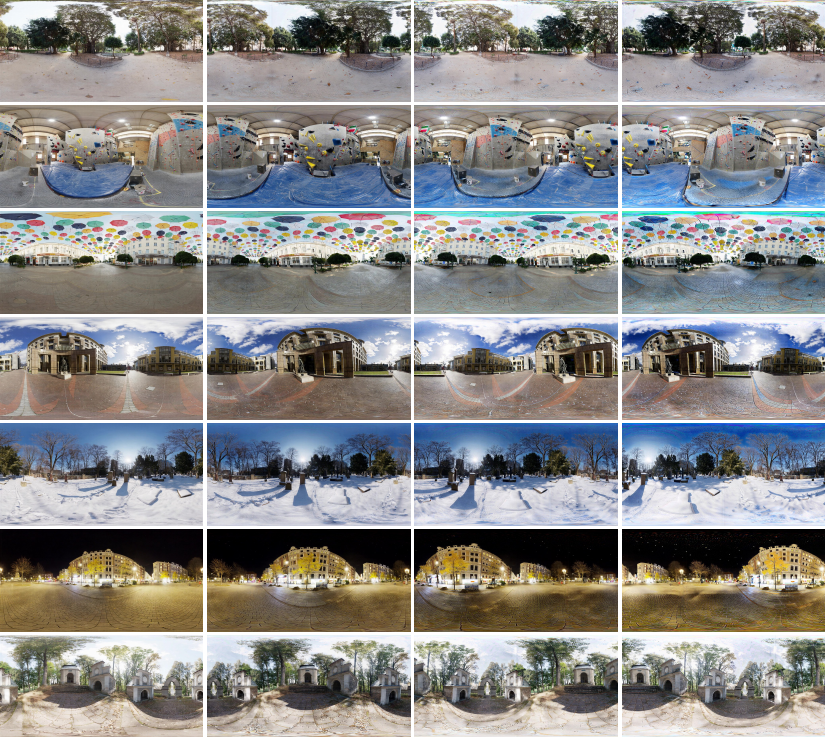}
\caption{
One-minute streams from SPW-Nav. Each row spans 60 to 74 seconds.
}
\label{fig:longvideo}
\end{figure}

\FloatBarrier
\subsection{Navigation accuracy}
Table~\ref{tab:language} evaluates the vision-language backbone in isolation on the held-out test scene and on all navigation instructions of SPW-NavBench.

\begin{table}[!ht]
\centering
\caption{
Navigation accuracy. Accuracy of the vision-language backbone on the held-out test scene and on the navigation instructions of SPW-NavBench.
}
\label{tab:language}
\small
\begin{tabular*}{\linewidth}{@{\extracolsep{\fill}}lcccc@{}}
\toprule
 & \multicolumn{2}{c}{Test scene} & \multicolumn{2}{c}{SPW-NavBench} \\
Method & Action & Goal & Action & Goal \\
\midrule
Template rules & 0.100 & 0.000 & 0.375 & 0.000 \\
\textit{W/o fine-tuning} & 0.663 & \underline{0.154} & \underline{0.850} & \underline{0.450} \\
\textit{W/o image} & \underline{0.900} & 0.135 & \textbf{1.000} & 0.000 \\
\textbf{Ours} & \textbf{0.925} & \textbf{0.692} & \textbf{1.000} & \textbf{0.617} \\
\bottomrule
\end{tabular*}
\end{table}

\FloatBarrier
\subsection{Streaming generation}
Table~\ref{tab:streaming} reports metrics for each of four consecutive instructions in the streaming experiment.

\begin{table}[!ht]
\centering
\caption{
Streaming generation. Metrics over four consecutive instructions in 20 scenes. Since segment 2 is a pure turn, its TDE is undefined.
}
\label{tab:streaming}
\small
\begin{tabular*}{\linewidth}{@{\extracolsep{\fill}}lcccc@{}}
\toprule
Segment & 1 & 2 & 3 & 4 \\
\midrule
TDE (\(^\circ\)) $\downarrow$ & 11.2 & N/A & 31.8 & 19.1 \\
RE (\(^\circ\)) $\downarrow$ & 0.9 & 2.8 & 1.1 & 0.6 \\
MA $\uparrow$ & 0.86 & 1.00 & 0.51 & 0.69 \\
FVD $\downarrow$ & 224.9 & 364.4 & 316.3 & 435.5 \\
\bottomrule
\end{tabular*}
\end{table}

\FloatBarrier
\subsection{Comparison with perspective generators}
Perspective camera-controlled generators take no language instructions and cannot produce panoramas. We therefore compare them only in this setting, on a perspective view cropped from SPW-Nav's output (Table~\ref{tab:perspective}). SPW-Nav has the lowest rotation errors, and its video quality is second only to CameraCtrl. The evaluated views also differ in how they are produced. The \(60^\circ\) crop covers one sixth of the panorama width, about 340 pixels of SPW-Nav's output, while each perspective baseline spends its full output resolution on this single view and generates nothing outside it. HunyuanWorld-Voyager follows translation more accurately through depth-based reprojection.

\begin{table}[!ht]
\centering
\caption{
Comparison with perspective generators on SPW-NavBench.
}
\label{tab:perspective}
\small
\setlength{\tabcolsep}{2pt}
\begin{tabular*}{\linewidth}{@{\extracolsep{\fill}}lccccccc@{}}
\toprule
Method & PSNR\,\(\uparrow\) & LPIPS\,\(\downarrow\) & FVD\,\(\downarrow\) & TDE\,\(\downarrow\) & RE\,\(\downarrow\) & MA\,\(\uparrow\) & RPE\textsubscript{R}\,\(\downarrow\) \\
\midrule
Stable Virtual Camera~\citep{zhou2025seva} & 12.52 & 0.531 & 553.6 & 26.3 & 6.95 & 0.677 & 1.987 \\
ReCamMaster~\citep{bai2025recammaster} & 15.54 & 0.440 & 353.7 & 48.9 & 5.19 & 0.336 & 0.836 \\
CamI2V~\citep{zheng2024cami2v} & 16.54 & 0.364 & 332.8 & 30.4 & 3.29 & 0.666 & \underline{0.483} \\
HunyuanWorld-Voyager~\citep{huang2025voyager} & 16.57 & 0.388 & 301.0 & \textbf{17.1} & 3.45 & \textbf{0.768} & 0.583 \\
CameraCtrl~\citep{he2024cameractrl} & \textbf{17.31} & \textbf{0.340} & \textbf{269.8} & \underline{25.5} & \underline{3.28} & \underline{0.713} & 0.493 \\
Matrix-Game 3.0~\citep{wang2026matrixgame3} & 16.10 & 0.355 & 844.2 & 93.2 & 8.69 & 0.057 & 0.943 \\
\midrule
\textbf{SPW-Nav} & \underline{17.12} & \underline{0.352} & \underline{278.8} & 27.0 & \textbf{2.23} & 0.617 & \textbf{0.321} \\
\bottomrule
\end{tabular*}
\end{table}

\FloatBarrier
\subsection{Guidance weight}
Table~\ref{tab:guidance} sweeps the OTG weight \(\kappa\) with and without NTL.

\begin{table}[!ht]
\centering
\caption{
Full sweep of the OTG weight \(\kappa\) with and without NTL on SPW-NavBench.
}
\label{tab:guidance}
\small
\setlength{\tabcolsep}{2pt}
\begin{tabular*}{\linewidth}{@{\extracolsep{\fill}}cccccccccc@{}}
\toprule
NTL & \(\kappa\) & TDE\,\(\downarrow\) & MA\,\(\uparrow\) & ATE\,\(\downarrow\) & RE\,\(\downarrow\) & RPE\textsubscript{R}\,\(\downarrow\) & PSNR\,\(\uparrow\) & LPIPS\,\(\downarrow\) & FVD\,\(\downarrow\) \\
\midrule
 & 1 & 29.15 & 0.469 & \textbf{4.80} & 26.58 & 3.157 & 13.69 & 0.494 & 512.2 \\
 & 2 & 27.85 & 0.517 & 5.64 & 28.13 & 3.350 & 13.26 & 0.514 & 562.3 \\
\midrule
\checkmark & 1 & 43.53 & 0.395 & 12.33 & \textbf{0.99} & \textbf{0.160} & \textbf{17.05} & \textbf{0.361} & 395.6 \\
\checkmark & 2 & \underline{24.30} & 0.631 & \underline{5.00} & \underline{1.84} & \underline{0.282} & \underline{16.48} & \underline{0.382} & \textbf{308.0} \\
\checkmark & 3 & \textbf{24.09} & \textbf{0.649} & 5.25 & 3.14 & 0.506 & 15.67 & 0.412 & \underline{333.4} \\
\checkmark & 4 & 24.80 & \underline{0.642} & 6.24 & 5.71 & 0.959 & 14.99 & 0.442 & 381.5 \\
\checkmark & 5 & 24.82 & 0.634 & 6.88 & 4.90 & 1.089 & 14.42 & 0.469 & 436.9 \\
\bottomrule
\end{tabular*}
\end{table}

\FloatBarrier
\section{Evaluation protocols}
\label{app:evaluation}

\FloatBarrier
\subsection{Protocol}
The navigation table evaluates the vision-language backbone on a held-out test scene and
on SPW-NavBench, which contains 40 displacement instructions and 60 visible-target
instructions. The streaming and long-horizon experiments use 20 scenes. The long-horizon experiment runs
two closed routes of twelve chunks per scene whose final pose equals the initial one. The first
moves forward, turns around, walks back, and turns around again. The second moves forward and
back, then turns right and left by the same angle. Matrix-Game 3.0 receives each route as keyboard and mouse actions under its official calibration, and CubeComposer completes its perspective video into a panorama for the panoramic evaluation. Unless noted otherwise, video metrics are computed at the
source timestamps after aligning each prediction to the evaluated view.

\FloatBarrier
\subsection{Metric definitions}
We use two groups of metrics. For the video-generation metrics, camera motion is
estimated from forward crops of the generated video with a VGGT-based estimator.
Trajectory direction error (TDE) is the angle between the commanded and estimated
translation directions, rotation error (RE) is the error of the cumulative rotation, and
RPE\textsubscript{R} is the rotation error per interval. \(\Delta\)Rot.\ is the absolute
difference between the mean estimated cumulative rotation and the mean commanded rotation. ATE is a scale-free trajectory shape
error, and motion accuracy (MA) is the fraction of evaluation intervals whose motion follows
the commanded direction. PSNR on the forward view, LPIPS averaged over four viewing directions, and FVD measure
video quality. Memory gain compares
a revisited view with its earlier observation, relative to a non-revisiting view at the same
time gap, and closure errors compare the final estimated pose with the initial one. The navigation-target metrics evaluate the vision-language backbone alone. A
displacement target is correct when its path is within \(\max(0.15\,\text{m}, 10\%)\) of the
commanded excursion and its heading within \(5^\circ\). A goal target is correct when its box
overlaps the annotated target with an IoU of at least 0.3 and its center bearing is within
\(20^\circ\). The Action and Goal columns of the navigation table report these accuracies.

\FloatBarrier
\section{Limitations and future work}
\label{app:limitations}
The multi-term memory of generated frames already keeps revisited views consistent and closes long routes with near-exact rotation. A more explicit memory (\eg, a pose-indexed 3D scene representation) could make consistency on distant revisits more reliable, although advancing memory mechanisms for video generation may also achieve this within the video itself. More diverse training trajectories could further improve the trajectory shape of translation.

\FloatBarrier
\section{Dataset details}
\label{app:dataset}

\textbf{Indoor videos.} We render 8,527 panoramic videos with Habitat from 29 indoor scene configurations, three from HM3D, ten from Replica, and sixteen from ReplicaCAD, along camera paths that cover translation in all four horizontal directions, turns, and composed routes.

\textbf{Outdoor videos.} The outdoor subset contains 9,609 MUGEN clips from 2,856 source videos after removing clips with static cameras or in-place rotation. The original captions often describe turning, which the rotation-free video no longer shows. We therefore replace them with a neutral scene prompt. The distillation stage uses a direction-balanced subset of 2,500 of these videos.

\textbf{Instruction annotations.} Explicit instructions cover translation in four directions, pure turns, stopping, corrections, out-and-back routes, and two-leg routes, with varied distances, English and Chinese phrasings, and varied background panoramas. Each goal is labeled with a bounding box and an approach distance, verified by two vision-language teachers, and augmented by rotating the panorama and its box together. The annotations span 68 unique goals from four indoor scenes. Two scenes provide the training set, and validation and test sets of 128 and 132 instructions come from the two remaining scenes, whose action wording and distances are also absent from training.

\textbf{SPW-NavBench.} The 100 clips come from 94 MUGEN source videos, each with an 81-frame window. Of the navigation instructions, 40 are explicit instructions and 60 are goal instructions.

\textbf{Curation pipeline.} For the outdoor videos, we discard MUGEN clips whose camera stays nearly static and clips that only rotate in place, resample the remaining clips to be rotation-free, and replace their captions with a neutral scene prompt, which leaves 9,609 clips. For the instruction annotations, explicit instructions are derived from camera trajectories, and their motion targets are therefore exact by construction. A goal candidate is kept only after two vision-language teachers verify it.

\textbf{Data splits.} The instruction annotations use disjoint scenes for training, validation, and testing. SPW-NavBench shares neither clips nor source videos with any training data of the video backbone, including the distillation subset, or with the instruction annotations.

\textbf{Statistics.} SPW-NavSet has 18,136 clips (9,609 outdoor, 8,527 indoor), about 233 hours at 30 FPS. Figure~\ref{fig:dataset_samples} shows sample clips.

\begin{figure}[!ht]
\centering
\includegraphics[width=\linewidth,height=0.88\textheight,keepaspectratio]{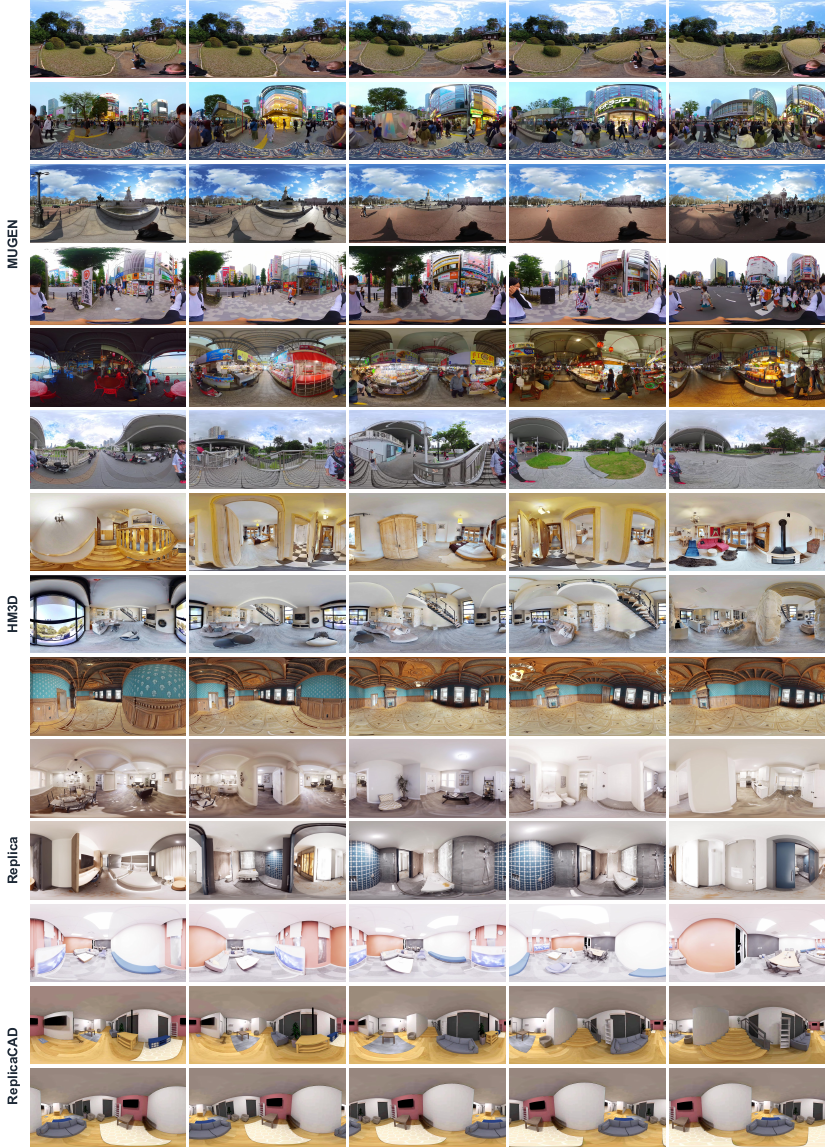}
\caption{
Sample clips of SPW-NavSet. Each row is one clip, with frames evenly spaced from the first to the last frame. The label on the left gives the source of the clips.
}
\label{fig:dataset_samples}
\end{figure}

\FloatBarrier
\section{Implementation details}
\label{app:implementation}

\textbf{Backbone and representation.} The video backbone is Wan2.2-TI2V-5B~\citep{wan2025wan} fine-tuned on panoramic video,
with LoRA of rank 256 on its linear layers. Its VAE compresses frames by $16\times$
spatially and $4\times$ temporally, giving a $64\times128$ latent grid. Decoding pads
the latents by two columns copied from the opposite side, preserving continuity at the
left and right panorama borders. Generated videos have resolution $1024\times2048$ and
16 FPS.

\textbf{Training schedule.} The video backbone is trained without language on the rotation-free videos in three stages, autoregressive training with the multi-term
memory (10,000 steps), pyramid training (4,000 and 9,500 steps), and
trajectory-conditioned training (28,832 steps). The student is initialized by 10,000 ODE-regression
steps and distilled for 3,500 steps, 1,500 over the full noise range followed by 2,000 with noise-tail sampling, with learning rates of
$2\times10^{-6}$ (generator) and $4\times10^{-7}$ (critic), five critic
updates per generator step, and an EMA decay of 0.99.

\textbf{Vision-language training and compute.} The vision-language backbone is fine-tuned for three epochs with learning rate
$5\times10^{-5}$ and batch size 16 on four NVIDIA H200 GPUs.

\FloatBarrier
\section{Baseline details}
\label{app:baselines}

\textbf{Panoramic interfaces.} NWM~\citep{bar2024navigation} predicts $224\times224$ perspective views from navigation
actions and additionally receives three ground-truth past views, as required by its
interface. CubeComposer~\citep{li2026cubecomposer} completes a perspective video into a
panorama. Since it takes no trajectory, we repeat the first panorama as a static clip and
render its perspective input along the commanded rotation. It thus receives no translation.
OmniRoam~\citep{liu2026omniroam} takes the full camera trajectory. The Action and Causal
variants of PanoWorld~\citep{li2026panoworld} fix the heading and receive translation
only. All panoramic baselines generate 81 frames at $960\times480$, except CubeComposer,
which generates 27 frames at $1536\times3072$. All are evaluated at matching timestamps.

\textbf{Perspective interfaces.} Stable Virtual Camera~\citep{zhou2025seva}, ReCamMaster~\citep{bai2025recammaster},
CamI2V~\citep{zheng2024cami2v}, HunyuanWorld-Voyager~\citep{huang2025voyager}, and
CameraCtrl~\citep{he2024cameractrl} receive the same $60^\circ$ crop of the first
panorama as the conditioning image. Each output is aligned to its actual field of view
before evaluation.

\textbf{Language interfaces.} In the end-to-end setting, PanoWorld and OmniRoam receive the instruction appended to
their text prompts together with the native motion preset matching the instruction's
keywords. NWM has no text input and receives the actions planned by our vision-language
backbone without fine-tuning. CubeComposer takes no language and is excluded.

\FloatBarrier
\section{Streaming generation loop}
\label{app:algorithm}
Algorithm~\ref{alg:streaming} makes the state update explicit. Each instruction is interpreted once, while generation, memory update, and exact spherical rotation happen once per chunk.

\begin{algorithm}[H]
\caption{Streaming panoramic generation}
\label{alg:streaming}
\small
\begin{algorithmic}
\REQUIRE Initial panorama $I_0$ at pose $T_0$, instructions $\{u_1, \ldots, u_K\}$
\STATE $\mathcal{M} \gets \text{Bootstrap}(I_0)$, $T_{\text{cur}} \gets T_0$, $\bar{I} \gets I_0$
\FOR{$k = 1$ \textbf{to} $K$}
  \STATE $A_k \gets \text{Trajectory}(\text{Interpret}(u_k, \bar{I}), T_{\text{cur}})$
  \FOR{each chunk $c$ of $A_k$}
    \STATE $\mathbf{p}_c \gets$ translations of chunk $c$ relative to its reference $\rho_c$
    \STATE $\widetilde{V}_c \gets \text{PyramidGenerate}(\mathcal{M}, \mathbf{p}_c)$ with Eq.~\ref{eq:guidance} at the coarsest scale
    \STATE $\mathcal{M} \gets \text{Update}(\mathcal{M}, \widetilde{V}_c)$, $V_c \gets \text{Rotate}(\text{Decode}(\widetilde{V}_c), A_k)$
  \ENDFOR
  \STATE $T_{\text{cur}} \gets T_{k,L}$, $\bar{I} \gets$ last frame of $V_c$
\ENDFOR
\end{algorithmic}
\end{algorithm}

\section{Correctness of spherical rotation decoupling}
\label{app:srd}

We show that spherical rotation decoupling reproduces the video of the full camera trajectory exactly, up to the interpolation of discrete resampling. We model the scene as static during a chunk and describe it by its light field \(L(\mathbf{c}, \mathbf{w})\), the color seen from a camera center \(\mathbf{c} \in \mathbb{R}^3\) in the world direction \(\mathbf{w} \in \mathbb{S}^2\). A camera with pose \(T = (R, \mathbf{t})\), where \(R \in \mathrm{SO}(3)\) maps camera axes to world axes, records the panorama
\begin{equation}
I(\mathbf{d}) = L(\mathbf{t}, R\,\mathbf{d}), \qquad \mathbf{d} \in \mathbb{S}^2,
\label{eq:lightfield}
\end{equation}
where \(\mathbf{d}\) is the ray of a pixel in camera coordinates (Eq.~\ref{eq:erp}). A panorama therefore records every world direction, and the camera orientation only decides which pixel stores which direction.

\begin{lemma}[Rotation is resampling]
\label{lem:rotation}
Let two cameras share the center \(\mathbf{t}\) and have orientations \(R_a\) and \(R_b\). Their panoramas satisfy \(I_b(\mathbf{d}) = I_a(R\,\mathbf{d})\) for all \(\mathbf{d} \in \mathbb{S}^2\), with \(R = R_a^\top R_b \in \mathrm{SO}(3)\).
\end{lemma}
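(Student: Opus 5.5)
The argument is a direct unwinding of the light-field model in Eq.~\ref{eq:lightfield}, so I would prove Lemma~\ref{lem:rotation} by a substitution and two elementary facts about \(\mathrm{SO}(3)\).

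\textbf{Step 1: expand the right-hand side.} Fix \(\mathbf{d} \in \mathbb{S}^2\) and set \(R = R_a^\top R_b\). Since \(R\) is orthogonal, \(R\,\mathbf{d} \in \mathbb{S}^2\), so \(I_a(R\,\mathbf{d})\) is defined. Applying Eq.~\ref{eq:lightfield} to camera \(a\) at this ray gives
\[
I_a(R\,\mathbf{d}) = L(\mathbf{t}, R_a R\,\mathbf{d}).
\]

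\textbf{Step 2: simplify the direction.} Using \(R_a R_a^\top = I\) for \(R_a \in \mathrm{SO}(3)\), we get \(R_a R = R_a R_a^\top R_b = R_b\). Hence \(L(\mathbf{t}, R_a R\,\mathbf{d}) = L(\mathbf{t}, R_b\,\mathbf{d})\).

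\textbf{Step 3: identify the left-hand side.} By Eq.~\ref{eq:lightfield} applied to camera \(b\), which has the same center \(\mathbf{t}\), the last expression equals \(I_b(\mathbf{d})\). Since \(\mathbf{d}\) was arbitrary, the identity holds on all of \(\mathbb{S}^2\). Finally, \(R \in \mathrm{SO}(3)\) because \(\mathrm{SO}(3)\) is a group closed under transpose (inverse) and products, and \(\det R = \det R_a \det R_b = 1\).

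\textbf{Main obstacle.} There is no genuine analytic difficulty. The only step that needs care is keeping the convention that \(R\) maps camera axes to world axes, so that \(R_a^\top R_b\), rather than \(R_b^\top R_a\), is the correct relative rotation. I would check this by confirming that \(R_a R = R_b\), which is exactly what Step 2 requires.

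I would also remark on the scope of the result. It is exact on the continuous sphere. The "up to interpolation" caveat enters only later, when \(I_a\) is sampled on the equirectangular grid of Eq.~\ref{eq:erp} and \(R\,\mathbf{d}\) falls between pixel centers. That discretization is not part of this lemma.
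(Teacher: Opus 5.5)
Your proposal is correct and matches the paper's proof: both expand Eq.~\ref{eq:lightfield} and insert \(R_a R_a^\top = I\) to identify \(L(\mathbf{t}, R_b\,\mathbf{d})\) with \(L(\mathbf{t}, R_a R_a^\top R_b\,\mathbf{d}) = I_a(R_a^\top R_b\,\mathbf{d})\). The only differences are that you run the chain starting from the right-hand side, and that you add the routine checks \(R\,\mathbf{d} \in \mathbb{S}^2\) and \(R \in \mathrm{SO}(3)\), which the paper leaves implicit.
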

\begin{proof}
By Eq.~\ref{eq:lightfield}, \(I_b(\mathbf{d}) = L(\mathbf{t}, R_b\,\mathbf{d}) = L(\mathbf{t}, R_a R_a^\top R_b\,\mathbf{d}) = I_a(R_a^\top R_b\,\mathbf{d})\), since \(R_a R_a^\top = I\).
\end{proof}

Since \(R\) is a bijection of \(\mathbb{S}^2\) onto itself, the resampling neither creates nor removes content. A pure rotation of a panoramic camera thus requires no generation, in contrast with a perspective camera, whose field of view covers only part of the sphere.

\begin{proposition}[Decoupling]
\label{prop:srd}
Let a video \(\{I_n\}\) be recorded along poses \(T_n = (R_n, \mathbf{t}_n)\), and let \(\widetilde{I}_n(\mathbf{d}) = I_n(R_n^\top R_0\,\mathbf{d})\) and \(\widetilde{\mathbf{t}}_n = R_0^\top(\mathbf{t}_n - \mathbf{t}_0)\) as in Section~\ref{sec:method:generation}. Then
\begin{enumerate}
\item[(i)] \(\widetilde{I}_n\) is the panorama recorded at center \(\mathbf{t}_n\) with the fixed orientation \(R_0\), so \(\{\widetilde{I}_n\}\) is a physically valid video of the same scene along a pure translation, which is \(\widetilde{\mathbf{t}}_n\) in the camera axes of the first frame.
\item[(ii)] the original video is recovered exactly as \(I_n(\mathbf{d}) = \widetilde{I}_n(R_0^\top R_n\,\mathbf{d})\).
\item[(iii)] the rotation-free video depends on the trajectory only through \(\{\widetilde{\mathbf{t}}_n\}\), so any two trajectories with the same initial pose and translations but different subsequent rotations share the same rotation-free video.
\end{enumerate}
\end{proposition}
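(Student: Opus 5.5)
The plan is to derive all three parts directly from the light-field model of Eq.~\ref{eq:lightfield} together with Lemma~\ref{lem:rotation}, treating the resampling as an exact operation on functions on \(\mathbb{S}^2\). Interpolation error from discretization is set aside, consistent with the ``up to interpolation'' qualifier.

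For (i), I would substitute the definition into Eq.~\ref{eq:lightfield}:
\[
\widetilde{I}_n(\mathbf{d}) = I_n(R_n^\top R_0\,\mathbf{d}) = L(\mathbf{t}_n, R_n R_n^\top R_0\,\mathbf{d}) = L(\mathbf{t}_n, R_0\,\mathbf{d}).
\]
This is exactly the panorama a camera with pose \((R_0, \mathbf{t}_n)\) would record, which is Lemma~\ref{lem:rotation} applied with \(R_a = R_n\) and \(R_b = R_0\). Hence \(\{\widetilde{I}_n\}\) is a video of the same static scene taken by cameras whose orientations all equal \(R_0\), so the motion is a pure translation. To express that translation in the camera axes of the first frame, I compute the relative pose \((R_0, \mathbf{t}_0)^{-1}(R_0, \mathbf{t}_n) = (I, R_0^\top(\mathbf{t}_n - \mathbf{t}_0))\), which is \(\widetilde{\mathbf{t}}_n\). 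Along the way I would also note that \(\widetilde{I}_0 = I_0\).

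For (ii), I would invert the resampling. Evaluating \(\widetilde{I}_n\) at \(R_0^\top R_n\,\mathbf{d}\) gives \(I_n(R_n^\top R_0 R_0^\top R_n\,\mathbf{d}) = I_n(\mathbf{d})\), using orthogonality of rotations. Equivalently, this is Lemma~\ref{lem:rotation} again with the roles of \(R_a\) and \(R_b\) swapped. Because the map \(\mathbf{d} \mapsto R\,\mathbf{d}\) is a bijection of the sphere, no information is lost in either direction.

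For (iii), by (i) the rotation-free frame is \(\widetilde{I}_n(\mathbf{d}) = L(\mathbf{t}_n, R_0\,\mathbf{d})\), which depends only on \(R_0\), \(\mathbf{t}_n\) and the scene. Given a shared initial pose \((R_0, \mathbf{t}_0)\), the centers satisfy \(\mathbf{t}_n = \mathbf{t}_0 + R_0\widetilde{\mathbf{t}}_n\), so they are determined by \(\{\widetilde{\mathbf{t}}_n\}\). The rotations \(R_n\) for \(n \ge 1\) never enter, so two trajectories that differ only in those rotations produce identical rotation-free videos.

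I expect the main obstacle to be interpretive rather than computational. The care lies in keeping the convention consistent: \(R\) maps camera axes to world axes, which fixes the direction of \(R_n^\top R_0\) versus \(R_0^\top R_n\). I would also state the static-scene assumption explicitly, so that a single light field \(L\) serves all frames.
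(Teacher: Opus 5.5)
Your proposal is correct and follows the paper's proof essentially step for step: you use the same light-field substitution \(L(\mathbf{t}_n, R_n R_n^\top R_0\,\mathbf{d}) = L(\mathbf{t}_n, R_0\,\mathbf{d})\) for (i), the same one-line inversion for (ii), and the same argument for (iii) that \(R_n\) never appears and \(\mathbf{t}_n = \mathbf{t}_0 + R_0\widetilde{\mathbf{t}}_n\). The only differences are cosmetic: in (ii) you cancel \(R_n^\top R_0 R_0^\top R_n\) through the definition of \(\widetilde{I}_n\) while the paper cancels inside \(L\), and in (i) your relative-pose computation (identity rotation, translation \(\widetilde{\mathbf{t}}_n\)) is a more formal version of the paper's change of axes.
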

\begin{proof}
(i) By Eq.~\ref{eq:lightfield}, \(\widetilde{I}_n(\mathbf{d}) = L(\mathbf{t}_n, R_n R_n^\top R_0\,\mathbf{d}) = L(\mathbf{t}_n, R_0\,\mathbf{d})\), which is the panorama of a camera at \(\mathbf{t}_n\) with orientation \(R_0\). Its displacement from the first frame is \(\mathbf{t}_n - \mathbf{t}_0\) in world axes, or \(R_0^\top(\mathbf{t}_n - \mathbf{t}_0) = \widetilde{\mathbf{t}}_n\) in the axes of a camera with orientation \(R_0\).
(ii) Applying Lemma~\ref{lem:rotation} with \(R_a = R_0\) and \(R_b = R_n\) at the common center \(\mathbf{t}_n\) gives \(I_n(\mathbf{d}) = \widetilde{I}_n(R_0^\top R_n\,\mathbf{d})\). Directly, \(\widetilde{I}_n(R_0^\top R_n\,\mathbf{d}) = L(\mathbf{t}_n, R_0 R_0^\top R_n\,\mathbf{d}) = L(\mathbf{t}_n, R_n\,\mathbf{d}) = I_n(\mathbf{d})\).
(iii) By (i), \(\widetilde{I}_n(\mathbf{d}) = L(\mathbf{t}_n, R_0\,\mathbf{d})\) contains no \(R_n\), and \(\mathbf{t}_n = \mathbf{t}_0 + R_0\,\widetilde{\mathbf{t}}_n\) is determined by \(\widetilde{\mathbf{t}}_n\) once the first pose is fixed.
\end{proof}

Proposition~\ref{prop:srd} justifies both halves of spherical rotation decoupling. By (i), the rotation-free training videos are genuine translation-only videos of the original scenes, so the video backbone learns real parallax rather than an artificial warp. By (ii) and (iii), a model that generates the rotation-free video for the commanded translations and then applies \(R_0^\top R_n\) frame by frame produces the video of the full trajectory, and the rotation never needs to be generated.

\begin{proposition}[Invariance of the pose inputs]
\label{prop:pac}
Let \(G \in \mathrm{SE}(3)\) be any rigid transformation of the world frame, which maps every pose \(T_n\) to \(G\,T_n\). The relative pose of pose-aligned conditioning is unchanged, \((G\,T_{\rho_k})^{-1}(G\,T_n) = T_{\rho_k}^{-1} T_n\). For rotation-free poses, its rotation block is the identity.
\end{proposition}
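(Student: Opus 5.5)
The first claim follows from group algebra in \(\mathrm{SE}(3)\), and the second from how rotation-free poses are constructed in spherical rotation decoupling.

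\textbf{Invariance.} Since \(\mathrm{SE}(3)\) is a group, inversion reverses products: \((G\,T_{\rho_k})^{-1} = T_{\rho_k}^{-1} G^{-1}\). Hence
\[
(G\,T_{\rho_k})^{-1}(G\,T_n) = T_{\rho_k}^{-1} G^{-1} G\, T_n = T_{\rho_k}^{-1} T_n .
\]
This holds for every \(n\) in chunk \(k\) and every reference \(\rho_k\), so every row of \(\mathbf{p}_k\) is invariant. The fixed scale \(s\) is applied after forming the relative pose, so it does not affect the argument.

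\textbf{Identity rotation block.} I would write the poses in block form, \(T_n = (R_n, \mathbf{t}_n)\), with
\[
T_{\rho_k}^{-1} T_n = \bigl(R_{\rho_k}^\top R_n,\; R_{\rho_k}^\top(\mathbf{t}_n - \mathbf{t}_{\rho_k})\bigr).
\]
By Proposition~\ref{prop:srd}(i), every frame of the rotation-free video is recorded with the same orientation \(R_0\). The rotation block is therefore \(R_0^\top R_0 = I\).

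The translation block is \(R_0^\top(\mathbf{t}_n - \mathbf{t}_{\rho_k}) = \widetilde{\mathbf{t}}_n - \widetilde{\mathbf{t}}_{\rho_k}\). Multiplying by \(s\) gives exactly the conditioning input described in Section~\ref{sec:method:generation}.

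The only point needing care is convention. One must check that a world change acts by left multiplication \(T \mapsto G\,T\), and that the rotation-free poses are \((R_0, \mathbf{t}_n)\) rather than \((I, \widetilde{\mathbf{t}}_n)\). Either choice gives the identity block, because the rotation factor cancels in the relative pose.
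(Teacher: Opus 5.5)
Your proof is correct and follows the paper's argument. Invariance comes from writing \((G\,T_{\rho_k})^{-1} = T_{\rho_k}^{-1}G^{-1}\) and cancelling \(G^{-1}G\), and the identity rotation block comes from all rotation-free poses sharing the orientation \(R_0\), so the relative rotation is \(R_0^\top R_0 = I\); your block-form check that the translation block equals \(\widetilde{\mathbf{t}}_n - \widetilde{\mathbf{t}}_{\rho_k}\) is a harmless addition that the paper leaves implicit.
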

\begin{proof}
\((G\,T_{\rho_k})^{-1}(G\,T_n) = T_{\rho_k}^{-1} G^{-1} G\,T_n = T_{\rho_k}^{-1} T_n\). All rotation-free poses share the orientation \(R_0\), so the relative rotation is \(R_0^\top R_0\), the identity.
\end{proof}

\begin{remark}
In practice, \(\mathbf{d} \mapsto R\,\mathbf{d}\) is evaluated at the pixel rays of Eq.~\ref{eq:erp} with bilinear sampling that wraps around the longitude seam, so the equalities above hold up to interpolation error. For a rotation about the vertical axis by an angle \(\alpha\), the resampling reduces to a circular horizontal shift \(I'(\theta, \phi) = I(\theta + \alpha, \phi)\), which is exact whenever \(\alpha\) is a multiple of the pixel width \(2\pi/W\). We apply the rotation to decoded frames rather than to latents, because the video autoencoder is not equivariant to general rotations of the sphere, while generation and memory stay in the initial orientation.
\end{remark}

\end{document}